\newcommand{\T}{\mathcal{T}}
\newcommand{\B}{\mathcal{B}}
\newcommand{\Prod}{\mathcal{P}}
\newcommand{\G}{\mathbf{G}}
\newcommand{\F}{\mathbf{F}}
\newcommand{\U}{\mathbf{U}}
\newcommand{\X}{\mathbf{X}}
\newcommand{\Nat}{\mathbb{N}}
\newcommand{\Real}{\mathbb{R}}
\newcommand{\ie}{\emph{i.e.}, }
\newcommand{\eg}{\emph{e.g.}, }
\newcommand{\run}{\rho}
\newcommand{\runf}{\rho_{\mathrm{fin}}}
\newcommand{\runp}{\rho_{\mathrm{pfix}}}
\newcommand{\barrunp}{\bar{\rho}_{\mathrm{pfix}}}
\newcommand{\runprod}{\varrho}
\newcommand{\runprodp}{\varrho_{\mathrm{pfix}}}
\newcommand{\sur}{\pi_{\mathrm{sur}}}
\newcommand{\pot}{\mathsf{pot}}
\newcommand{\pref}{\mathsf{pref}}
\newcommand{\short}{I}
\newcommand{\vis}{v}
\newcommand{\Vis}{V}
\newcommand{\dist}{W^*}
\newcommand{\distp}{W^*_\Prod}
\newcommand{\distpi}{W^*_{\Prod\pi}}
\newcommand{\distvarphi}{W^*_{\Prod\phi}}
\newcommand{\attr}{\mathsf{attr}_\Prod}
\newcommand{\sinf}{S_{\Prod\pi}^\infty}
\newcommand{\finf}{F_{\Prod}^\infty}
\newcommand{\maxpot}{\mathsf{pot'}}
\newtheorem{definition}{Definition}
\newtheorem{problem}{Problem Formulation}
\newtheorem{theorem}{Theorem}
\newtheorem{example}{Example}
\newtheorem{assumption}{Assumption}
\newtheorem{lemma}{Lemma}
\newtheorem{corollary}{Corollary}
\title{\LARGE \bf Attraction-Based Receding Horizon Path Planning \\ with Temporal Logic Constraints}
\author{M\'aria Svore\v{n}ov\'a, Jana T\r{u}mov\'a, Ji\v{r}\'i Barnat and Ivana \v{C}ern\'a
\thanks{This work was partially supported by grants no. GD102/09/H042, GAP202/11/0312, and LH11065.}
\thanks{The authors are with Faculty of Informatics, Masaryk University, Brno, Czech Republic. Email:
        {\tt\footnotesize svorenova@mail.muni.cz, \{xtumova,barnat,cerna\}@fi.muni.cz.}}
}
\begin{document}

\maketitle
\thispagestyle{empty}
\pagestyle{empty}


\begin{abstract}
Our goal in this paper is to plan the motion of a robot in a partitioned environment with dynamically changing, locally sensed rewards. We assume that arbitrary assumptions on the reward dynamics can be given. The robot aims to accomplish a high-level temporal logic surveillance mission and to locally optimize the collection of the rewards in the visited regions. These two objectives often conflict and only a compromise between them can be reached. We address this issue by taking into consideration a user-defined \emph{preference function} that captures the trade-off between the importance of collecting high rewards and the importance of making progress towards a surveyed region. Our solution leverages ideas from the automata-based approach to model checking. We demonstrate the utilization and benefits of the suggested framework in an illustrative example.
\end{abstract}


\section{Introduction}

In this paper, we consider the problem of robot path planning (see, \eg \cite{lavalle2006planning} for an overview) with more complex missions than "Go from $A$ to $B$ while avoiding obstacles.". Recently, different versions of temporal logics, such as Linear Temporal Logic (LTL), Computation Tree Logic (CTL), or $\mu$-calculus have been successfully employed to specify such robotic missions~\cite{ Antoniotti95,Loizou04,marius-tac2008, sertac-cdc2009, hadas09TL, fainekos09TL, LaAnBe-ACC10}. We have chosen LTL~\cite{FirstLtlDef,principles} as the specification means for its resemblance to natural language and its ability to express interesting robot behavior, such as "Repeatedly survey regions $A$ and $B$ while avoiding dangerous regions. Make sure, that $A$ is always visited in between two successive visits to $B$ and vice versa.".

We assume that the robot motion in the environment is modeled as a transition system, which is obtained by partitioning the environment into regions (for instance using well-known triangulations and rectangular partitions). Each region is modeled as a state of the transition system and the robot's capability to move between the regions as transitions between the corresponding states. Our transition system is deterministic, \ie a control input for the robot is the next region (state) to be visited. Moreover, the transition system is weighted, \ie each transition is equipped with the time duration this transition takes.

The robot's task is to collect rewards that dynamically appear, disappear and change their values in the environment regions and that can be sensed only within a certain vicinity of the robot's current state. A traditional approach to this kind of problem, \ie an optimization problem defined on a dynamically changing plant, is model predictive control (MPC)~\cite{mpc}. The method is based on iterative re-planning and optimization of a cost function over a finite horizon and hence, it is also called receding horizon control.

In this work, we focus on interconnecting the receding horizon control with the synthesis of a path that is provably correct with respect to a given temporal logic formula. This idea appeared in~\cite{Nok2009,nok-hscc2010}, where the receding horizon approach was employed to fight the high computational complexity of reactive motion planning with a specification in GR(1) fragment of LTL. However, the authors did not consider any rewards collection to be optimized. In contrast, the authors in~\cite{Dennis2010} addressed a similar problem that we do. They assumed a deterministic weighted transition system with locally sensed rewards changing according to an unknown dynamics. While they required the satisfaction of an LTL mission, they also targeted to collect maximal rewards locally, within a given horizon. These two goals often cannot be reached simultaneously. If the robot primarily collects high rewards, the mission might never be satisfied and vice versa, if the robot is planned to accomplish the mission, the collected rewards might become low. The authors utilized ideas from the automata-based approach to model checking in order to iteratively find a local path maximizing the collected rewards among the local paths that ensure that a step towards the mission satisfaction is made. This way, they managed to compromise between the two goals.

Our work can be seen as a different, generalized approach to the above problem. We allow the trade-off between the two goals to be partially driven by user-defined preferences that may dynamically change during the execution of the robot. In particular, we assume an LTL mission that includes surveillance of a set of regions and a user-defined \emph{preference function} expressing the desired trade-off between the surveillance and the reward collection given the history of the robot motion. In other words the preference function determines in each moment whether moving towards a surveyed place or optimization of the collected rewards is of a higher priority. Whereas the local path planned in~\cite{Dennis2010} always guarantees progress towards the satisfaction of the mission, in our case this progress may be deliberately postponed (for a finite amount of time) if the collection of the rewards is prioritized. For example, consider a garbage truck that is required to periodically visit two garbage disposal plants $A$ and $B$ and to arrive to a plant as fully loaded as possible. In~\cite{Dennis2010}, each local plan for the truck would send the truck closer to $A$ (or $B$, respectively) and the truck might arrive half-empty. In contrast, through the preference function, we can define that collecting the garbage is the primary target until the truck is full enough to drive to a plant and that once it is, driving towards $A$ (or $B$, respectively) becomes the priority. Besides that, we generalize the problem from~\cite{Dennis2010} in the following sense. The authors there assumed that the reward dynamics is completely unknown. Therefore, when planning, they estimate that the rewards collected along a planned local path would be equal to the sum of the rewards that are currently seen at the states of this path and they aim to maximize it. We consider that arbitrary assumptions on the reward dynamics might be given and we estimate the rewards collected along a planned local path accordingly. We also allow for a broader class of optimization functions. 

In our solution, we leverage ideas from the automata-based approach to model-checking to provably guarantee the satisfaction of the mission and we introduce several extensions that allow us to support both the preference function and the arbitrary assumptions on the reward dynamics. We build a so-called product automaton that captures all the runs of the transition system that satisfy the mission. We employ the preference function to compute the \emph{attraction} of states in the product automaton and at each time, we choose the most attractive state to be visited next. While the value of the preference function is low, the robot is primarily driven by the sensed rewards. However, as the preference function grows, the surveillance is prioritized and the attraction forces the robot to move not only towards the surveyed regions, but also towards accepting states of the product automaton, \ie towards the satisfaction of the global specification.

Our contribution can be summarized as follows. We develop a general framework for robot motion planning with high-level LTL mission specifications and locally optimal reward collection with respect to given reward dynamics assumptions and local rewards sensing. We introduce a novel approach that allows to prescribe whether the rewards collection or the mission progress are of a higher interest. We present several illustrative examples and simulation results to demonstrate the usability of  our approach.

The rest of the paper is organized as follows. In Section~\ref{sec:prelims} we review necessary preliminaries. In Section~\ref{sec:pf}, the problem is described in detail and stated formally. In Section~\ref{sec:solution}, we present the solution, correctness and completeness proofs and discussions on the solution optimality. In Section~\ref{sec:case} a case study is introduced and we conclude in Section~\ref{sec:conclusion}.


\section{Notation and Preliminaries}
\label{sec:prelims}

In this section we introduce notation and preliminaries used throughout the paper. 

Given a set $\mathsf{S}$, we denote by $\mathsf{S}^+$ and $\mathsf{S}^\omega$ all finite, nonempty and all infinite sequences of elements from $\mathsf{S}$, respectively.

\begin{definition}[Weighted Deterministic Transition System]
\label{def:dts}
A \emph{weighted deterministic transition system (TS)} is a tuple $\T = (Q, q_0, T , \Pi, L, W)$, where
\begin{itemize}
\item $Q$ is a finite set of states;
\item $q_0\in Q$ is an initial state;
\item $T \subseteq Q \times Q$ is a transition relation;
\item $\Pi$ is a set of atomic propositions;
\item $L: Q \rightarrow 2^\Pi$ is a labeling function; and
\item $W: T \to \Real_{>0}$ is a weight function.
\end{itemize}
\end{definition}
The states of the transition system represent the regions of the environment and the transitions represent the robot's capabilities to move between them. We assume that there is a transition from each state. The atomic propositions are properties that are either true or false in each region of the environment, for instance "This region is a pickup/delivery location.". The labeling function $L$ assigns to each state the set of atomic propositions that hold true in this state. The weight function assings to each transition the amount of time that this transition takes. If the robot is in a state $q$ at time $t$ and follows a transition $(q,q') \in T$, then it is in the state $q'$ at time $t+ W\big((q,q')\big)$. The time spent in states is $0$.

A \emph{run} of $\T$ is an infinite sequence $\run=q_0q_1\ldots$ such that $q_0$ is the initial state and $(q_i,q_{i+1}) \in T$, for all $i\geq 0$. A~\emph{finite run} $\runf = q_i \ldots q_j $ of $\T$ is a finite subsequence of a run $\run = q_0\ldots q_i \ldots q_j\ldots $ of $\T$. A \emph{run prefix} $\runp$ of $\T$ is a finite run that originates at the initial state $q_0$. For simplicity, we denote by $q \in \run$ ($q \in \runf)$ the fact that the state $q$ occurs in the run $\run$ (the finite run $\runf$). Associated with a run $\run = q_0q_1\ldots$ (and a run prefix $\runp=q_0\ldots q_n$) there is a sequence of time instances $t_0t_1\ldots$ (and $t_0\ldots t_n$), where $t_0 = 0$, and $t_i$ denotes the time at which the state $q_i$ is reached $(t_{i+1} = t_i + W( q_i, q_{i+1}))$. A run $\rho=q_0q_1\ldots$ generates a unique \emph{word} $\omega(\rho) = L(q_0)L(q_1)\ldots$. A~\emph{control strategy} $C: Q^+ \rightarrow Q$ for $\T$ assigns the next state to be visited to each run prefix of~$\T$. The run generated by $C$  is $\rho = q_0q_1\ldots$, such that $q_i = C(q_0\ldots q_{i-1})$, for all $i \geq 1$.

With a slight abuse of notation, we use $W(\runf)$, where $\runf = q_i\ldots q_j$ is a finite run of $T$, to denote the total weight of the sequence of transitions $(q_i,q_{i+1}), \ldots, (q_{j-1},q_j)$, \ie  $W(\runf) = \sum_{k=i}^{j-1} W\big((q_k,q_{k+1})\big)$. Furthermore, we define $\dist(q_i, q_j)$ as the minimum weight of a finite run from $q_i$ to $q_j$. In particular, $\dist(q_i,q_i)=0$, and $\dist(q_i,q_j)=\infty$ if there does not exist a finite run from $q_i$ to $q_j$. 

\begin{definition}[Linear Temporal Logic]
\label{def:ltl}
A \emph{linear temporal logic (LTL) formula} $\phi$ over the set of atomic propositions $\Pi$ is defined according to the following rules:
$$\phi ::= \top \mid  \pi \mid \neg \phi \mid \phi \vee \phi \mid \phi \wedge \phi \mid \X \, \phi \mid \phi \, \U \, \phi \mid \G\, \phi \mid \F\, \phi,$$
where $\top$ is always true, $\pi \in \Pi$ is an atomic proposition, $\neg$~(negation), $\vee$ (disjunction) and $\wedge$ (conjunction) are standard Boolean connectives, and $\X$ (next), $\U$ (until), $\G$ (always) and $\F$~(eventually) are temporal operators.
\end{definition}
The semantics of LTL is defined over infinite sequences over~$2^\Pi$, such as those generated by the transition system $\T$ from Def.~\ref{def:dts}. Assume that $\phi$, $\phi_{1}$, and $\phi_{2}$ are LTL formulas over $\Pi$ and $\omega = \omega(0)\omega(1)\ldots \in {(2^\Pi)}^\omega$ is a word generated by a run $\rho$ of $\T$. The word $\omega$ satisfies an atomic proposition $\pi$ if $\pi$ holds in the first position of $\omega$, \emph{i.e.}, if $\pi \in \omega(0)$. The formula $\X \, \phi$ states that $\phi$ needs to hold next, \ie for the word $\omega(1)\ldots$. The formula $\phi_1\,\U\, \phi_2$ means that $\phi_2$ is true eventually, while $\phi_1$ is true at least until $\phi_2$ becomes true. Formulas $\G \,\phi$ and $\F \, \phi$ state that $\phi$ holds always and eventually, respectively. More expressiveness can be achieved by combining the operators. A detailed description of LTL can be found in \cite{principles}. As expected, a run $\run$ of $\T$ satisfies $\phi$ if and only if the word $\omega(\rho)$ generated by $\run$ satisfies $\phi$.

\begin{definition}[B\"uchi Automaton]
\label{def:buchi}
A \emph{B\"uchi automaton (BA)} is a tuple $\B=(S,s_{0},\Sigma,\delta,F)$, where
\begin{itemize}
\item $S$ is a finite set of states;
\item $s_{0} \in S$ is an initial state;
\item $\Sigma$ is an input alphabet;
\item $\delta \subseteq S \times \Sigma \times {S}$ is a transition relation; and
\item $F \subseteq S$ is a set of accepting states.
\end{itemize}
\end{definition}
The semantics of a B\"uchi automaton is defined over infinite \emph{input words}. Note that if $\Sigma = 2^\Pi$, then the input words are infinite sequences of sets of atomic propositions, such as those generated by $\T$. A \emph{run} of $\B$ \emph{over} an input word $\sigma = a_0a_1\ldots \in \Sigma^\omega$ is a sequence of states $\varrho = s_0s_1\ldots$ such that $s_0$ is the initial state and $\big(s_i,a_i, s_{i+1}\big) \in \delta$, for all $i\geq 0$. A run $\varrho$ is \emph{accepting} if and only if a state from $F$ appears in $\varrho$ infinitely many times. A word $\sigma$ is accepted by the B\"uchi automaton if there exists an accepting run over~$\sigma$.

For any LTL formula $\phi$ over $\Pi$, there exists a B\"uchi automaton $\B_\phi$ with input alphabet $2^\Pi$ accepting all and only the words satisfying formula $\phi$. Algorithms for translation of an LTL formula into a corresponding B\"uchi automaton were proposed \cite{FastLtlToBa}, and several tools are available \cite{ltl2dstar}.

\begin{definition}[Weighted Product Automaton]
\label{def:product}
A \emph{weighted product automaton} between a TS $\T=(Q,q_0,T,\Pi,L,W)$ and a BA $\B_\phi=(S,s_{0},2^\Pi, \delta,F)$ is a tuple $\Prod=\T \times B_\phi = (S_\Prod,s_{\Prod0},\delta_{\Prod},F_{\Prod},W_\Prod)$, where
\begin{itemize}
\item $S_\Prod=Q\times S$ is a set of states;
\item $s_{\Prod0}=(q_0,s_0)$ is the initial state;
\item $\delta_{\Prod} \subseteq S_\Prod \times {S_\Prod}$ is a transition relation, where $\big((q,s),(q',s')\big)\in \delta_\Prod$ if and only if $(q,q')\in T$ and $(s,L(q),s') \in \delta$;
\item $F_{\Prod}=Q\times F$ is the set of accepting states; and
\item $W_{\Prod}: \delta_{\Prod} \to \mathbb{R}_{>0}$ is a weight function, where $W_{\Prod}\big(\big((q,s),(q',s')\big)\big) = W\big((q,q')\big)$, for all $\big((q,s),(q',s)\big) \in \delta_\Prod$.
\end{itemize}
\end{definition}

Note that the product automaton defined above is a weighted version of a standard B\"uchi automaton with a trivial alphabet that is thus omitted. We denote by $\alpha(\varrho_\Prod)$ the projection of a run $\varrho_\Prod$ of $\Prod$ onto its first components, \ie $\alpha\big((q_0,s_0)(q_1,s_1)\ldots\big) = q_0q_1\ldots$.
An accepting run $\varrho_\Prod$ of the product automaton $\Prod$ can be projected onto a run $\alpha(\varrho_\Prod)$ of $\T$ that satisfies the formula $\phi$, and vice versa, if $\rho = q_0q_1\ldots$ is a run of $\T$ satisfying $\phi$, then there exists an accepting run $\varrho_\Prod =(q_0,s_0)(q_1,s_1)\ldots$ of $\Prod$.

The product automaton can be also viewed as a transition system $\T_\Prod = (S_\Prod,s_{\Prod0},\delta_{\Prod},\Pi, L_\Prod,W_\Prod)$, where $L_\Prod\big((q,s)\big) = L(q)$, for all $(q,s) \in S_\Prod$. Hence, the objects that are defined on a transition system are defined on the product automaton in the expected way. Namely, we use $W_\Prod\big(\varrho_{\Prod\mathrm{fin}} \big)$, $\distp\big(p_i,p_j\big)$ and 
$C_\Prod\big(\varrho_{\Prod\mathrm{pfix}}\big)$ to denote the total weight of a finite run $\varrho_{\Prod\mathrm{fin}}$, the minimum weight between states $p_i$, $p_j$ and the control strategy for $\Prod$, respectively.


\section{Problem Formulation}
\label{sec:pf}

Consider a robot moving in a partitioned environment modeled as a weighted deterministic transition system. The states of the transition system correspond to individual regions of the environment and the transition between them model the robot motion capabilities.
Assume, that there is a dynamically changing non-negative real-valued \emph{reward} associated with each state of the transition system. The robot senses the rewards in its close proximity and collects the rewards as it visits the regions of the environment, \ie as the states of the transition system change. Moreover, the robot is given a high-level LTL \emph{mission}. The problem addressed in previous literature~\cite{Dennis2010} is to design a control strategy that (1)~guarantees the satisfaction of the mission and (2)~locally maximizes the collected rewards.

We focus on a different version of the above problem allowing for partial regulation of the trade-off between the two objectives. In particular, first, we consider a user-defined \emph{preference function} that, given a history of robot's movement, expresses whether moving closer to a region under surveillance or collecting rewards is prioritized. Second, we consider arbitrary reward dynamics that might be unknown, known partially or even fully. We capture the concrete reward dynamics assumptions through a so-called \emph{state potential function}. 
The problem we address is to design a control strategy that (1)~guarantees the satisfaction of the mission, (2)~locally optimizes the collection of rewards, and (3)~takes into consideration the preference function and the reward dynamics assumptions.

We formalize the problem as follows.
The robot motion in the environment is given as a TS $\T~=~(Q,q_0,T,\Pi,L,W)$ (Def.~\ref{def:dts}). The rewards can be sensed at time $t_k$ within the \emph{visibility range} $\vis \in \Real_{>0}$ from the robot's current position~$q_k$. We denote by $\Vis(q_k) = \{q \mid \dist(q_k,q) \leq v\}$ the set of states that are within the visibility range~$\vis$ from $q_k$ (assuming that $q \in V(q_k)$, for all $(q_k,q)\in T$) and by $R: Q \times Q^+ \rightarrow \Real_{\geq 0}$ the \emph{reward function}, where $R(q,q_0\ldots q_k)$ is the reward sensed in the state $q$ at time~$t_k$ after executing the run prefix $q_0\ldots q_k$. Note that $R(q,q_0\ldots q_k)$ is defined iff $q \in \Vis(q_k)$ and it is known only at time $t_k$ (and later), not earlier.

A user-defined \emph{planning horizon} and a \emph{state potential function} are employed to capture user's assumptions about the reward dynamics and her interests. For instance, the values of the rewards may increase or decrease at most by~1 during 1 time unit, they may appear according to a probabilistic distribution, or their changes might be random. The user might have full, partial or no knowledge of the reward dynamics. The rewards might disappear once they are collected by the robot, or they might not. The user might be interested in the maximal, expected, or minimal sum of rewards that can be collected from a given state during a finite run whose weight is no more than the planning horizon. The concrete definitions of the planning horizon and the state potential function are meant to be specifically tailored for different cases.
Formally, the horizon is $h \in \Real_{>0}$, $h\geq \max_{(q,q')\in T} W(q,q')$ and the state potential function is $\pot: Q \times Q^+ \times \Real_{>0} \rightarrow \Real_{\geq0},$ where $\pot(q,q_0\ldots q_k,h)$ is the potential of the state $q$ at time~$t_k$. 
More precisely, the value of $\pot(q,q_0\ldots q_k,h)$ is defined for all~$q$, where $(q_k,q) \in T$ and captures the rewards that can be collected after execution of the run prefix $q_0\ldots q_k$ during a finite run $\runf \in P_\mathrm{fin}(q,q_k,h)$, where
\addtolength{\abovedisplayskip}{-0.1em}
\addtolength{\belowdisplayskip}{-0.1em}
\begin{align*}
P_\mathrm{fin} & (q,q_k,h) = \{ \runf \mid \  \runf \text{ is a finite run of $\T$, such that } \\
& \text{(i) } \runf \text { originates at } q; \\ & \text{(ii) }  W(\runf) + W\big((q_k,q)\big) \leq h; \text{ and}\\ & \text{(iii) } \text{the states that appear in } \runf \text{ belong to } \Vis(q_k)\}.
\end{align*}

Note, that the visibility range $v$ and the planning horizon $h$ are independent. Whilst $v$ determines the set of states whose rewards are visible from the current state $q_k$, $h$ gives the maximal total weight of a planned finite run $\runf$ within $\Vis(q_k)$, which can be even greater than $v$.

\begin{example}
The function stating that the potential of $q$ is the maximal sum of rewards that can be collected from $q$ assuming that the rewards do not change while the robot can sense them and that they disappear once collected is
$$\pot(q,q_0\ldots q_k,h) = \max\limits_{\runf \in P_\mathrm{fin} (q,q_k,h)} \sum_{q' \in  \runf} R(q',q_0\ldots q_k).$$
 In fact, this is how authors in~\cite{Dennis2010} estimate the amount of rewards collected on a local path. 
\end{example}

To define our problem, we assume that there is a set of regions labeled with a so-called \emph{surveillance proposition} $\sur \in \Pi$ and a part of the mission is to periodically fulfill the surveillance proposition by visiting one of those regions. The missions are then expressed as LTL formulas of form
\begin{equation}
\phi = \varphi \wedge \G\,\F\, \sur,
\label{eq:formula}
\end{equation}
where $\varphi$ is an arbitrary LTL formula over $\Pi$. The subformula $\G \, \F \, \sur$ states that the surveillance proposition $\sur$ has to be visited always eventually, \ie infinitely many times. 
Note, that formulas $\phi = \varphi \wedge \G\,\F\, \top$ hold true if and only if $\varphi$ hold true and therefore the prescribed form does not restrict the full LTL expressivity.

The user can partially guide whether the robot should collect high rewards or whether it should rather make a step towards the satisfaction of the surveillance proposition $\sur$ through a \emph{preference function}. For example, the preference function can grow linearly with time since the latest visit to~$\sur$, meaning that going towards $\sur$ gradually gains more importance. In contrast, the value of the preference function can stay low until the latest visit to $\sur$ happened no later than 100 time units ago and after that increase rapidly, expressing that the robot is preferred to collect rewards for 100 time units and then to move towards $\sur$ quickly.

Formally, the preference function $\pref: Q^+ \rightarrow \Real_{\geq0}$ assigns a non-negative real value to each executed run prefix $q_0\ldots q_k$ of $\T$ (possibly) taking into account the current values of the state potential function. 

\begin{example} An example of a preference function is
$$\pref(q_0\ldots q_k) = 0.01 \cdot W_{i} \cdot \max\limits_{(q_{k},q)\in T}\pot(q,q_0\ldots q_k,h),$$
 where $W_i = W(q_i \ldots q_k)$, such that $\sur \in L(q_i)$, and $\sur \not \in L(q_j)$, for all $i<j\leq k$. If the surveyed state is being avoided, the total weight $W_i$ since the last visit to a surveyed state gradually grows and eventually, the value of $\pref(q_0\ldots q_k)$ overgrows the value of $\pot(q,q_0\ldots q_k,h)$ for all $q$.
\end{example}

A \emph{shortening indicator function} $I$ indicates whether a transition leads the robot closer to a state subject to surveillance.
$\short: T \rightarrow \{0,1\}$ is defined as follows:
\begin{equation*}\label{eq:short}
 \short\big((q,q')\big) = \begin{cases}
        1  & \text{if} \min\limits_{q_\pi \in Q_\pi} \dist(q',q_\pi) < \min\limits_{q_\pi \in Q_\pi} \dist(q,q_\pi),\\
        0 & \text{otherwise},
        \end{cases}
\end{equation*}
where $(q,q')\in T$ and $Q_\pi = \{q_\pi \mid \sur \in L(q_\pi)\}$. 

We are now ready to formally state our problem.
\begin{problem}\label{problem}
\textbf{Given}
the robot motion model $\T = (Q,q_0,T,\Pi,L,W)$;
the surveillance proposition $\sur \in \Pi$;
the visibility range $\vis$;
the reward $R(q,q_0\ldots q_k)$ at time $t_k$, for all $q \in \Vis(q_k)$;
the planning horizon $h$;
the state potential function $\pot$;
the LTL formula $\phi$ over $\Pi$ (Eq.~\ref{eq:formula}); and
the preference function $\pref$,
\textbf{find} a control strategy $C$, such that
\begin{itemize}
\item[(i)] the run generated by $C$ satisfies the mission $\phi$ and
\item[(ii)] assuming that $q = C(q_0 \ldots q_k)$, the cost function
\begin{equation}
\pot(q,q_0\ldots q_k ,h) + I\big((q_k,q)\big)\cdot\pref(q_0\ldots q_k)
\label{eq:opt}
\end{equation}
is maximized at each time $t_k$.
\end{itemize}
\end{problem}

Intuitively, condition (ii) is interpreted as follows. At each time, the aim is to go to the state with the best trade-off between the amount of potentially collected rewards and the importance of fast surveillance. The higher the value of the preference function, the more likely a state closer to $\sur$ is to be chosen. Note that, in general, the satisfaction of the condition~(ii) may cause violation of the objective~(i). Our goal is thus to provably guarantee accomplishment of the mission and to maximize Eq.~\ref{eq:opt}, if possible.

Our approach leverages some ideas from the automata-based solution from~\cite{Dennis2010}. However, several issues have to be overcome to support the user-defined trade-off as it will become clear in the following section. The solution consists of two consecutive steps. The first one is an offline preparation before the deployment of the system. It involves a construction of a BA for the given LTL mission and its product with the TS.
The offline algorithm assigns two Boolean indicators to each transition of the product automaton, which indicate whether the transition induces a progress to a subgoal, \ie a surveyed state of the transition system and both a surveyed state and an accepting state of the product automaton, respectively.
In the second step, an online feedback algorithm, which determines the next state to be visited by the robot, is iteratively run. In each iteration, attractions of the states of the product automaton are computed. The repeated choices of the maximal attraction states lead to an eventual visit not only to a surveyed state, but also to an accepting state of the product automaton, assuming that the following holds:

\begin{assumption}
For each run $q_0q_1\ldots$, with the property that $\exists n_1$, $\forall m> n_1$: $\sur \not \in L(q_m)$, it holds that $\exists n_2$, $\forall m> n_2$:  $\pref(q_0\ldots q_m)>\pot(q,q_0\ldots q_m,h)$, for all $q$, where $(q_m,q)\in T$.
\label{assump}
\end{assumption}

As we will show in Sec.~\ref{sec:solution_cor} the satisfaction of the LTL mission is guaranteed provided that the above assumption is true. From now on, we assume that Assump.~\ref{assump} holds. Intuitively, it says that if a visit to a surveyed state is postponed for a long time, the value of the preference function overweights the value of the state potentials. Note that this is, in fact, quite natural. It only captures the fact, that the user who defines the potential and the preference function wishes to satisfy the LTL formula in long term and therefore her interest in making a progress towards the satisfaction of the formula at some point naturally prevails her interest in collecting the rewards. Several examples of $\pot$ and $\pref$ functions that respect this assumption will be shown in Section~\ref{sec:case}.


\section{Solution}
\label{sec:solution}

In this section, we give the details of our solution to Problem~\ref{problem} and prove its correctness and completeness. Discussions on the optimality of the solution are included, too.

\subsection{Offline Indicator Asssignment}

Let $\B_\phi=(S,S_{0},\Sigma,\delta,F)$ be a B\"{u}chi automaton corresponding to the LTL formula $\phi = \varphi \wedge \G \, \F \, \sur$ (Eq.~\ref{eq:formula}) and $\Prod=\T \times \B_\phi=(S_\Prod,S_{\Prod0},\delta_{\Prod},F_{\Prod},W_\Prod)$ the product automaton constructed according to Def.~\ref{def:product}.

Let $S_{\Prod \pi} = \{(q,s)\in S_{\Prod} \mid \sur \in L(q)\}$ denote the subset of states of $\Prod$ that project onto the surveyed states in~$\T$. Furthermore, let $\finf \subseteq F_\Prod$ and $\sinf \subseteq S_{\Prod\pi}$ be the sets of states from which $S_{\Prod \pi}$ and $F_\Prod$ can be visited infinitely many times, respectively. Sets $\finf$ and $\sinf$ can be computed iteratively as the maximal sets of states from which a state in $\sinf$ and $\finf$ is reachable via a finite run of nonzero length, respectively (see Alg.~\ref{alg:value}, lines~\ref{algl:fixpoint1}-\ref{algl:fixpoint2}).

\begin{lemma}\label{th:inftysets}
A run $\varrho_{\Prod}$ of $\Prod$ is accepting iff a state from $\finf$ and a state from $\sinf$ appear in $\varrho_\Prod$ infinitely many times.
\end{lemma}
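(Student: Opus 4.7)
The plan is to prove the two directions of the equivalence separately. The backward direction is essentially immediate from the inclusion $\finf \subseteq F_\Prod$: if $\varrho_\Prod$ visits $\finf$ infinitely often, it a fortiori visits $F_\Prod$ infinitely often and is therefore accepting in the B\"uchi sense. (The hypothesis on $\sinf$ is not even needed for this direction.) So the substance of the proof lies in the forward direction.

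For the forward direction, I would start from an accepting run $\varrho_\Prod$ and record two facts. First, by the B\"uchi acceptance condition, states of $F_\Prod$ appear in $\varrho_\Prod$ infinitely often. Second, the projection $\alpha(\varrho_\Prod)$ is a run of $\T$ satisfying $\phi = \varphi \wedge \G\,\F\,\sur$ (as noted just after Def.~\ref{def:product}), so surveyed states of $\T$ are visited infinitely often, which translates back to $\Prod$ as: states of $S_{\Prod\pi}$ appear in $\varrho_\Prod$ infinitely often. Let $S_\infty$ denote the set of states that occur infinitely often in $\varrho_\Prod$, and set $A := S_\infty \cap F_\Prod$ and $B := S_\infty \cap S_{\Prod\pi}$. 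Both $A$ and $B$ are nonempty by the two observations above.

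The main step is to verify that the pair $(A, B)$ satisfies the recursive conditions defining $(\finf, \sinf)$, so that maximality forces $A \subseteq \finf$ and $B \subseteq \sinf$. Concretely, I would show that every $s \in A$ admits a finite path of nonzero length in $\Prod$ to some state of $B$, and symmetrically every $s \in B$ admits such a path to $A$. Given $s \in A$, fix any occurrence of $s$ in $\varrho_\Prod$; because states of $B$ occur infinitely often, the run reaches some $s' \in B$ after the chosen occurrence of $s$, and the segment of $\varrho_\Prod$ between these two positions is a finite path in $\Prod$. The symmetric argument works for $s \in B$.

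The one point that requires care is the \emph{nonzero length} requirement, which matters when $A \cap B \neq \emptyset$ (a state may simultaneously lie in $F_\Prod$ and in $S_{\Prod\pi}$) and in particular when $s$ itself already lies in the target set. This is easily handled by always selecting the next occurrence of a target-set state strictly after the chosen occurrence of $s$; such an occurrence exists precisely because the target set meets $S_\infty$. Once the nonzero-length reachability conditions are in hand, the greatest-fixpoint characterization of $(\finf,\sinf)$ described before Lem.~\ref{th:inftysets} yields $A \subseteq \finf$ and $B \subseteq \sinf$, and since $A, B \subseteq S_\infty$ consist of states visited infinitely often, the conclusion follows. I expect this bookkeeping around the nonzero-length path together with cleanly invoking the maximality of the fixpoint to be the only nontrivial aspect of the argument.
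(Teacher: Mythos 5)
Your proof is correct and follows essentially the same route as the paper's: the backward direction from $\finf \subseteq F_\Prod$, and the forward direction from B\"uchi acceptance plus the $\G\,\F\,\sur$ conjunct forcing infinitely many visits to $F_\Prod$ and $S_{\Prod\pi}$, whose infinitely-recurring states must then lie in $\finf$ and $\sinf$. Your handling of the nonzero-length requirement and the explicit appeal to maximality of the fixpoint is a bit more careful than the paper's version, but it is the same argument.
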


\begin{proof}
Let $\varrho_{\Prod}=\varrho_{\Prod}(0)\varrho_{\Prod}(1)\ldots $ be an accepting run of $\Prod$, \ie a run with infinitely many visits to $F_{\Prod}$. Note that there is a state in $S_{\Prod \pi}$ that appears in $\varrho_\Prod$ infinitely many times, because $\varrho_\Prod$ satisfies $\phi$ and hence also $\G \, \F \, \sur$. Then there exist infinite index sets $I,J\subseteq \Nat$, where $\varrho_{\Prod}(i)\in F_{\Prod}$, $\varrho_{\Prod}(j)\in S_{\Prod \pi}$, for all $i\in I$ and $j\in J$ . For each state $\varrho_{\Prod}(i)\in F_{\Prod}, i\in I$ there exist infinitely many states $\varrho_{\Prod}(j)\in S_{\Prod \pi}$ where $i<j\in J$, and analogous holds for each state $\varrho_{\Prod}(j)\in S_{\Prod \pi}, j\in J$. Hence, all states $\varrho_\Prod(i),\varrho_\Prod(j)$ where $i\in I, j\in J$ belong to $F_{\Prod}^{\infty},S_{\Prod \pi}^{\infty}$, respectively. 
On the other hand, if a state from $\finf$ occurs on $\varrho_\Prod$ infinitely many times, then $\varrho_\Prod$ is clearly accepting.
\end{proof}

For each state $p\in S_\Prod$ we define the minimum weight of a finite run from $p$ to a state from $\sinf$
\begin{equation}
\distpi (p) = \min_{p' \in \sinf} \distp \big(p,p'\big) 
\label{eq:distpi}
\end{equation}
and the minimum weight of a finite run from $p$ to $\sinf$ containing a state~$p' \in \finf$
\begin{equation}
W_{\Prod F \pi}^* (p,p') =  \min\limits_{p'' \in \sinf} \Big(\distp \big(p,p'\big)+\distp \big(p',p'')\Big).
\label{eq:Wpipp}
\end{equation}
Moreover, we define
\begin{equation}
\distvarphi (p) = \Big(\distp \big(p,p'), W_{\Prod F \pi}^* (p,p')\Big)
\label{eq:distvarphi1}
\end{equation}
where $p' \in \finf $ minimizes $\distp (p,p')$ among the set of states that minimize Eq.~\ref{eq:Wpipp}.
Given $\distvarphi (p_1) = (u_1,v_1)$ and $\distvarphi (p_2) = (u_2,v_2)$, $\distvarphi (p_1) < \distvarphi (p_2)$ if and only if $u_1 < u_2$ and $v_1 < v_2$.

Note that each state $p \in S_\Prod$ with $\distpi(p) = \infty$ or $\distvarphi(p) = (\infty,\infty)$ cannot occur on any accepting run of~$\Prod$. Therefore, 
we assume from now on that $\Prod$ contains only states $p\in S_\Prod$ with $\distpi(p) \neq \infty$ and $\distvarphi(p) \neq (\infty,\infty)$.

\begin{lemma}
$\forall p\in S_\Prod \setminus S_{\Prod\pi}^{\infty}, \exists p' \in S_\Prod: (p,p'), \in \delta_\Prod$, $\distpi(p) > \distpi(p')$, and 
$\forall p\in S_\Prod \setminus F_{\Prod}^{\infty}, \exists p' \in S_\Prod: (p,p'), \in \delta_\Prod, \distvarphi(p) > \distvarphi(p')$.
\end{lemma}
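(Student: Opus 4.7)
The plan is to handle both statements uniformly by picking $p''$ to be the first vertex after $p$ on a minimum-weight witness path, and then to exploit strict positivity of $W_\Prod$ to convert the ``$\leq$'' path-splitting estimate into a strict decrease.

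For the first statement, since $p \notin \sinf$ but by assumption $\distpi(p) < \infty$, there is a finite run $p = p_0 p_1 \ldots p_n$ of positive length ($n \geq 1$, because $p \notin \sinf$ while $p_n \in \sinf$) whose total weight equals $\distpi(p)$. Set $p'' = p_1$, so that $(p, p'') \in \delta_\Prod$. The suffix $p_1 \ldots p_n$ is a run from $p''$ to a state in $\sinf$ of weight $\distpi(p) - W_\Prod\!\big((p, p''))\big)$, hence $\distpi(p'') \leq \distpi(p) - W_\Prod\!\big((p, p'')\big) < \distpi(p)$ by $W_\Prod > 0$.

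For the second statement, let $p^* \in \finf$ be the specific state selected in the definition of $\distvarphi(p) = \bigl(\distp(p,p^*),\, W^*_{\Prod F\pi}(p,p^*)\bigr)$; again $p \neq p^*$ since $p \notin \finf$, so a minimum-weight run from $p$ to $p^*$ has length $\geq 1$ and I take $p''$ to be its first successor. Path splitting gives $\distp(p'', p^*) = \distp(p, p^*) - W_\Prod\!\big((p, p''))\big)$. I then prove the key claim that $p^*$ remains a minimizer of $W^*_{\Prod F\pi}(p'', \cdot)$ on $\finf$: for any $q \in \finf$, $\distp(p,q) \leq W_\Prod\!\big((p,p'')\big) + \distp(p'',q)$, so adding $\distpi(q)$ to both sides yields $W^*_{\Prod F\pi}(p'', q) \geq W^*_{\Prod F\pi}(p,q) - W_\Prod\!\big((p, p'')\big) \geq W^*_{\Prod F\pi}(p, p^*) - W_\Prod\!\big((p, p'')\big)$, and this lower bound is attained at $q = p^*$. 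Consequently the second component of $\distvarphi(p'')$ equals $W^*_{\Prod F\pi}(p, p^*) - W_\Prod\!\big((p, p'')\big)$, and $p^*$ is in the tie-break set so the first component satisfies $\distp(p'', p^*)$ or less, i.e.\ at most $\distp(p, p^*) - W_\Prod\!\big((p, p''))\big)$. Both components therefore drop by at least $W_\Prod\!\big((p, p''))\big) > 0$, giving $\distvarphi(p'') < \distvarphi(p)$ in the sense defined after Eq.~\ref{eq:distvarphi1}.

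The main obstacle is the second statement, because $\distvarphi$ is a pair with a lexicographic-style tie-break and the minimizer $p^*$ of Eq.~\ref{eq:Wpipp} can a priori change when one steps to a successor. The crux is the observation above that the very $p^*$ chosen for $p$ still minimizes $W^*_{\Prod F\pi}(p'', \cdot)$ as soon as $p''$ lies on a minimum-weight $p$-to-$p^*$ run; this simultaneously controls both components and lets the strict positivity of edge weights deliver the strict decrease.
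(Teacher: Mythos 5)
Your proof is correct, and it supplies exactly the content that the paper's own one-line proof (``Follows directly from Eq.~\ref{eq:distpi}, \ref{eq:Wpipp} and \ref{eq:distvarphi1}'') leaves implicit: the first-successor-on-a-minimum-weight-run construction, the use of strict positivity of $W_\Prod$ to turn the path-splitting inequality into a strict decrease, and--the only genuinely non-obvious point--the verification that the state $p^*\in\finf$ selected for $p$ in Eq.~\ref{eq:distvarphi1} still minimizes $W^*_{\Prod F\pi}(p'',\cdot)$, so that both components of $\distvarphi$ drop simultaneously as required by the paper's componentwise ordering. This is the standard argument the authors are appealing to, so I would classify it as the same approach, just actually written out.
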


\begin{proof}
Follows directly from Eq.~\ref{eq:distpi}, \ref{eq:Wpipp} and \ref{eq:distvarphi1}.
\end{proof}

We are now ready to define the \emph{shortening indicator functions}
$I_{\Prod \pi},I_{\Prod \phi}\colon \delta_{\Prod}\to \{1,0\},$
which indicate whether a transition induces progress towards the set $\sinf$ and towards both the set $\finf$ and the set $\sinf$ via a state in $\finf$, respectively.
\begin{equation}
I_{\Prod x}\big((p,p')\big)= \begin{cases}
1 & \text{if } \dist_{\Prod x}\big(p\big)> \dist_{\Prod x}\big(p'\big),\\
0 & \text{otherwise,}
\end{cases}
\label{eq:ipi}
\end{equation}
where $x \in \{\pi, \phi\}$.

\begin{corollary}
$\forall p \in S_\Prod \setminus S_{\Prod \pi}^\infty, \exists (p,p')\in \delta_\Prod$, such that $I_{\Prod \pi}\big((p,p')\big) = 1$ and $\forall p \in S_\Prod \setminus F_{\Prod}^\infty, \exists (p,p')\in \delta_\Prod$, such that  $I_{\Prod \phi}\big((p,p')\big) =~1$.
\end{corollary}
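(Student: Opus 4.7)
The corollary is an immediate syntactic consequence of the preceding lemma together with the definition of $I_{\Prod x}$ in Eq.~\ref{eq:ipi}, so the plan is very short: unpack the two definitions and combine them in the two cases $x = \pi$ and $x = \phi$.

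More concretely, I would proceed as follows. Fix $p \in S_\Prod \setminus S_{\Prod\pi}^\infty$. The preceding lemma (applied in its first clause) produces a successor $p'$ with $(p,p') \in \delta_\Prod$ satisfying $\distpi(p) > \distpi(p')$. But this strict inequality is exactly the condition in Eq.~\ref{eq:ipi} defining $I_{\Prod \pi}\big((p,p')\big) = 1$, so the same $p'$ witnesses the first claim. The case $x = \phi$ is handled verbatim, using the second clause of the lemma and the (lexicographic-style) strict ordering on pairs introduced just below Eq.~\ref{eq:distvarphi1}: the lemma yields $p'$ with $\distvarphi(p) > \distvarphi(p')$, and this is precisely the condition for $I_{\Prod\phi}\big((p,p')\big) = 1$.

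There is no real obstacle. The only point that deserves a line of comment is that the strict inequality $\distvarphi(p) > \distvarphi(p')$ used by the lemma means componentwise strict inequality in the pair-ordering defined just after Eq.~\ref{eq:distvarphi1}; once this is noted, the match with the "otherwise" branch of Eq.~\ref{eq:ipi} is exact. I would therefore write the proof in essentially one sentence per case, or simply state that the corollary is immediate from the lemma and the definition of $I_{\Prod x}$.
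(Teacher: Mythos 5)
Your proposal is correct and matches the paper's (implicit) reasoning exactly: the paper states this as a corollary with no written proof precisely because it follows immediately from the preceding lemma together with the definition of $I_{\Prod x}$ in Eq.~\ref{eq:ipi}, which is the argument you give. Your added remark about the componentwise strict ordering on the pairs $\distvarphi(p)$ is a reasonable clarification but introduces nothing beyond what the paper intends.
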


The outline of the indicator assignment procedure for the product automaton $\Prod$ is summarized in Alg.~\ref{alg:value}.

\begin{algorithm}[!t]
\small
\caption{Indicator assignment algorithm}
\label{alg:value}
\begin{algorithmic}[1]
\INPUT{$\Prod=(S_\Prod,S_{\Prod0},\delta_{\Prod},F_{\Prod},W_\Prod)$}
\OUTPUT{$\Prod=(S_\Prod,S_{\Prod0},\delta_{\Prod},F_{\Prod},W_\Prod)$, $I_{\Prod \pi}, I_{\Prod \phi}$}
\STATE $F_{\Prod}^{\infty}:=F_{\Prod}, S_{\Prod \pi}^{\infty}:=S_{\Prod \pi}$
\WHILE {fix-point of $F_{\Prod}^{\infty}, S_{\Prod \pi}^{\infty}$ not found}
\label{algl:fixpoint1}
\FORALL {$ p \in F_{\Prod}^{\infty}$, s.t. $\min\limits_{(p,p')\in \delta_\Prod, p'' \in \sinf} \distp\big(p',p''\big)=\infty$}
\STATE remove $p$ from $F_{\Prod}^{\infty}$
\ENDFOR
\FORALL {$ p \in S_{\Prod \pi}^{\infty}$, s.t. $\min\limits_{(p,p')\in \delta_\Prod, p'' \in F_{\Prod}^{\infty}} \distp \big(p',p''\big)=\infty$}
\STATE remove $p$ from $S_{\Prod \pi}^{\infty}$
\ENDFOR
\ENDWHILE
\label{algl:fixpoint2}
\FORALL{$p \in S_\Prod$, s.t. $\distpi \big(p\big)=\infty \, \vee \, \distvarphi \big(p\big)=(\infty,\infty)$}
\STATE remove $p$ together with incident transitions
\ENDFOR
\label{algl:trimstates}
\FORALL{$(p,p') \in \delta_\Prod$}
\STATE compute $I_{\Prod \pi}\big((p,p')\big),I_{\Prod \phi}\big((p,p')\big)$ (Eq.~\ref{eq:ipi})
\ENDFOR
\end{algorithmic}
\end{algorithm}

\subsection{Online Planning}

The online planning algorithm is run at each $t_k$, such that $q_0\ldots q_k$ is the executed run prefix so far (\ie till the current time $t_k$) and it determines the next state $C(q_0\ldots q_k)$ of $\T$ to be visited. 
Simply put, we plan in the product automaton $\Prod$ and then we project the planned onto $\T$. Formally, $\T$ starts in its initial state $q_0$ and $\Prod$ in its initial state $(q_0,s_0)$. For each run prefix $(q_0,s_0)\ldots (q_k,s_k)$ of $\Prod$, the algorithm computes the next state of $\Prod$, denoted by $C_\Prod\big((q_0,s_0)\ldots (q_k,s_k)\big) = (q_{k+1},s_{k+1})$. The next state of $\T$ is 
$C(q_0\ldots q_k) = q_{k+1}$.

To guarantee that the control strategy $C$ generates a run of $\T$ satisfying $\phi$, it is sufficient to ensure that the control strategy $C_\Prod$ generates a run of $\Prod$ that visits $F_\Prod$ infinitely many times. In $\T$, the high value of the preference function $\pref$ was used to guide the robot towards $\sur$. Projected into the product automaton, the high value of $\pref$ can "send" the robot towards a state in $\sinf $. We expand this idea and use the preference function to guide the robot not only towards $\sinf$, but also towards $\finf$. This way, we ensure that $\finf$ is indeed visited infinitely many times.

In particular, we introduce two subgoals in $\Prod$. The first one is the \emph{mission subgoal}, when a visit to $\finf$ is targeted. The second one is the \emph{surveillance subgoal}, when we aim to visit $\sinf$. At each time, one of the subgoals is to be achieved and once it is, the subgoals are switched and the other one is to be achieved. Progress towards both subgoals is governed by maximization of the \emph{attraction function} $\attr$ which we define for the product automaton in analogous way as the cost function (Eq.~\ref{eq:opt}) for Problem~\ref{problem}.

Consider the product $\Prod$ 
obtained after the execution of the offline preparation algorithm (Alg.~\ref{alg:value}). Assume, that $\phi$ is satisfiable, \ie that $\finf$ and $\sinf$ are both nonempty and $(q_0,s_0)\in S_{\Prod}$.
The product $\Prod$ naturally inherits the rewards from~$\T$, \ie $R_\Prod\big((q,s),(q_0,s_0)\ldots(q_k,s_k)\big)=R(q,q_0\ldots q_k)$. Thus, the value of $\pot$ function can be computed on the product automaton (or, more precisely, on its underlying TS $\T_\Prod$) using $R_\Prod$. We use $\pot_\Prod(p,\runprodp,h)$ to denote the value of the state potential function for a state $p$ computed on~$\Prod$.

The value of the attraction $\attr : S_{\Prod}\times S_{\Prod}^{+}\times \Real_{>0}\to \Real_{\geq 0}$ is computed differently for both subgoals.
Initially, the subgoal to be achieved is the surveillance one and the attraction is
\begin{align}
\label{eq:attr}
\attr & \big(p,\runprodp,h\big) = \nonumber \\  & \pot_\Prod \big(p,\runprodp,h\big) + I_{\Prod \pi}\big((p_k,p)\big) \cdot \pref(\alpha(\runprodp)),
\end{align}
where $\runprodp = p_0\ldots p_k$, $(p_k,p) \in \delta_\Prod$. For any run prefix $p_0\ldots p_k$, let $C_\Prod(p_0\ldots p_k)$ be the state with the highest attraction (if there are more of them, we choose one randomly).
Hence, if the attraction of a state that is not closer to the subgoal is higher than the attraction of ones that are, the collection of rewards is preferred and vice versa. However, note that repeated choices of the states that maximize $\attr$ together with Assump.~\ref{assump} guarantee, that the surveillance subgoal, \ie a visit to $\sinf$ will be eventually achieved. Once it is, the mission subgoal becomes the one to be reached.

For the mission subgoal, the attraction needs to be defined in a different way. The reason is that with an analogous definition as for the surveillance subgoal, we would not be able to ensure eventual visit to $\finf$. Intuitively, if $\sur$ was repeatedly unintentionally visited, the value of $\pref(\alpha(\runprodp))$ might not overgrow the value of $\pot_\Prod \big(p,\runprodp,h\big)$, the "non-shortening" transitions might be always chosen to follow and a visit to $\finf$ might be infinitely postponed.

Thus, we define a projection function $\bar\alpha$ that projects a run prefix $\runprodp$ of $\Prod$ onto the corresponding run of $\T$ while removing $\sur$ from some of the states. In particular, on $\bar\alpha(\runprodp)$, the proposition $\sur$ appears at most once in between two successive visits to an accepting state in $\finf$.

\begin{definition}[Projection $\bar{\alpha}$]
Let $\bar\T=(\bar{Q},q_0,\bar{T},\Pi,\bar{L},\bar{W})$ be a transition system, where
$\bar Q = Q \cup \{\bar{q} \mid q \in Q\}$;
if $(q,q') \in T$, then $(q,q'), (\bar{q},q'), (q,\bar{q'}), (\bar{q},\bar{q'}) \in \bar T$ and $\bar W(q,q')=\bar W(\bar q,q')=\bar W(q,\bar{q'})=\bar W(\bar q, \bar{q'})= W(q,q')$; and
$\bar{L}(q) = L(q)$, and $\bar L(\bar q) = L(q)\setminus \{\sur\}$, for all $q \in Q$;
Let $\runprodp=(q_0,s_0) \ldots (q_k,s_k)$ be a run prefix of $\Prod$. 
$\barrunp(0) =q_0$;
$\barrunp(i) = q_i$ if $\sur \not \in L(q_i)$ or $\sur \in L(q_i)$ and $\exists j<i$, such that $(q_j,s_j) \in \finf$ and $\sur \not \in L(q_l)$, for all $j\leq l < i$; and
$\barrunp(i) = \bar q_i$ otherwise.
\end{definition}

The definition of the attraction for the mission mode is
\begin{align}
\label{eq:attr2}
\attr & \big(p,\runprodp,h\big) = \nonumber \\ & \pot_\Prod \big(p,\runprodp,h\big) + I_{\Prod \phi}\big((p_k,p)\big) \cdot \pref(\bar\alpha(\runprodp)),
\end{align}
where $\runprodp = p_0\ldots p_k$, $(p_k,p) \in \delta_\Prod$. 
Similarly as for the surveillance subgoal, the state $C_\Prod(p_0\ldots p_k)$ is the state maximizing attraction (if there are more of them, we choose one randomly). The construction of the attraction together with Assump.~\ref{assump} ensure that the mission subgoal is always eventually reached. Once it is, we aim for the surveillance subgoal again. If both of the subgoals are reached simultaneously, the surveillance subgoal is set to be reached.

The outline of the solution to Problem~\ref{problem} is given in Alg.~\ref{alg:solution}.

\begin{algorithm}[!h]
\small
\caption{Solution to Problem~\ref{problem}}
\label{alg:solution}
\begin{algorithmic}[1]
\INPUT{$\T, \sur, v, R, h, \pot, \phi, \pref$}
\OUTPUT{Control strategy $C$}
\STATE compute $\B_\phi$, $\Prod=\T \times \B_\phi$ and run Alg.~\ref{alg:value}
\IF{$F_{\Prod}^{\infty}=\emptyset$ or $(q_0,s_0)\not \in S_\Prod$}
\RETURN "Mission cannot be accomplished".
\ENDIF
\STATE $\runprodp := (q_0,s_0)$, subgoal $ := \sur$, $k :=0$
\label{alg:online1}
\WHILE {true}
\FORALL{$p$, s.t. $\big(p_k,p\big)\in \delta_{\Prod}$}
\STATE compute $\attr\big(p,\runprodp,h \big)(\text{Eq.}~\ref{eq:attr} \text{ if subgoal } = \sur$ \\ $\text{ and Eq.}~\ref{eq:attr2} \text{ if subgoal } = \phi)$
\ENDFOR
\STATE $C_\Prod(\runprodp):=p$ maximizing $\attr\big(p,\runprodp,h)$
\STATE $C(\alpha(\runprodp)):= \alpha\big(C_\Prod(\runprodp)\big)$
\IF {subgoal $ = \sur$ and $C_\Prod(\runprodp) \in \sinf$}
\STATE { subgoal $:= \phi$}
\ENDIF
\IF {subgoal $ = \phi$ and $C_\Prod(\runprodp) \in \finf$}
\STATE { subgoal $:= \sur$}
\ENDIF
\STATE concatenate $C_\Prod(\runprodp)$ to $\runprodp$; $k := k+1$
\ENDWHILE
\label{alg:online2}
\end{algorithmic}
\end{algorithm}

\subsection{Discussion}
\label{sec:solution_cor}
In this section, we prove that under Assump.~\ref{assump}, our algorithm is correct and complete with respect to the satisfaction of the LTL formula (condition (i) of Problem~\ref{problem}). We discuss  the sub-optimality of the solution and we introduce an assumption, under which the local plan is optimal with respect to condition (ii) of Problem~\ref{problem} among the solutions that do not cause an immediate, unrepairable violation of $\phi$.

\begin{theorem}[Correctness and Completness]
Alg.~\ref{alg:solution} returns a strategy $C$ that generates a run of $\T$ satisfying $\phi$ if and only if such a strategy exists.
\end{theorem}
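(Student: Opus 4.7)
I would split the proof into its two directions. The ``only if'' direction (if a satisfying strategy exists, the algorithm returns one) is short: the premise yields an accepting run of $\Prod$ from $(q_0,s_0)$, which by Lemma~\ref{th:inftysets} forces $\finf\neq\emptyset$ and $(q_0,s_0)$ to survive the trimming performed in Alg.~\ref{alg:value}. Consequently the failure check at the top of Alg.~\ref{alg:solution} is not triggered and the online loop produces some strategy $C$. The ``if'' direction (the returned $C$ really satisfies $\phi$) is the substantive part: again by Lemma~\ref{th:inftysets}, it suffices to show that the product run $\runprod$ generated by $C_\Prod$ visits $\finf$ infinitely often, which by the subgoal alternation in Alg.~\ref{alg:solution} is equivalent to showing that both subgoal targets are reached infinitely often.

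Because the subgoal flag switches only upon entering its current target set, the argument reduces to two symmetric claims: (a) while the subgoal is $\sur$, the algorithm reaches $\sinf$ in finitely many steps; (b) while the subgoal is $\phi$, the algorithm reaches $\finf$ in finitely many steps. I would prove (a) by contradiction. Suppose $\sinf$ is avoided forever after some $t_k$; since $\sinf\subseteq S_{\Prod\pi}$, the proposition $\sur$ eventually stops appearing on $\alpha(\runprodp)$, so Assump.~\ref{assump} guarantees that from some later index, $\pref(\alpha(\runprodp))$ exceeds $\pot_\Prod(p,\runprodp,h)$ for every successor $p$ of the current state. From that moment, the maximizer of Eq.~\ref{eq:attr} must be a successor with $I_{\Prod\pi}\big((p_k,p)\big)=1$ (such a successor exists by the Corollary), and along any such transition $\distpi$ strictly decreases. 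Since $\distpi$ attains only finitely many finite values on $S_\Prod$, a state of $\sinf$ is reached after finitely many steps, contradicting the assumption.

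The proof of (b) runs in parallel but uses the modified projection $\bar\alpha$ and this is where I expect the main difficulty. During mission mode the robot may incidentally pass through surveyed states, so $\sur$ can keep recurring on $\alpha(\runprodp)$ and Assump.~\ref{assump} would never fire; masking these visits is exactly the role of $\bar\alpha$, on which $\sur$ appears at most once between consecutive visits to $\finf$. If $\finf$ were avoided forever after some $t_k$, then $\sur$ would eventually vanish from $\bar\alpha(\runprodp)$, so Assump.~\ref{assump} applied to this projection forces $\pref(\bar\alpha(\runprodp))$ to dominate the potentials at every successor, and Eq.~\ref{eq:attr2} then selects a successor with $I_{\Prod\phi}=1$, strictly decreasing $\distvarphi$ in the lexicographic order of Eq.~\ref{eq:distvarphi1}. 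The finite range of $\distvarphi$ on $S_\Prod$ yields a visit to $\finf$ in finitely many steps, a contradiction. The delicate ingredient, which I would make explicit before closing the argument, is justifying that Assump.~\ref{assump} applies to $\bar\alpha(\runprodp)$, a sequence in $\bar\T$ rather than $\T$: one reads the assumption as a statement about any infinite projection on which $\sur$ eventually disappears, which is exactly the situation engineered by $\bar\alpha$ under the contradiction hypothesis.
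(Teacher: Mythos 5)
Your proposal is correct and follows essentially the same route as the paper's own (sketched) proof: completeness via Lemma~\ref{th:inftysets} and the emptiness check, and correctness by contradiction using Assump.~\ref{assump} separately for the surveillance and mission subgoals, with $\bar\alpha$ handling incidental visits to $\sur$. You actually supply two details the paper's sketch leaves implicit --- the termination argument via the strict decrease of $\distpi$ and $\distvarphi$ over their finite ranges, and the justification for applying Assump.~\ref{assump} to the masked projection $\bar\alpha(\runprodp)$ --- both of which are sound and strengthen the argument.
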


 \renewcommand{\arraystretch}{1.4}
\begin{table*}
\begin{tabular}{|l|}
\hline
$\textcolor{blue}{\pot_1}(q,q_{0}\ldots q_{k},h) =  \max \limits_{\run(0)\ldots \run(n) \in P_\mathrm{fin}(q,q_{k},h)} \sum_{i=0}^{n} f_1\big(\run(i),q_{0}\ldots q_{k},\rho(0)\ldots\rho(n)\big)$, and\\
$\textcolor{blue}{\pot_{2}}(q,q_{0}\ldots q_{k},h) = $ $\max \limits_{\run(0)\ldots \run(n) \in P_\mathrm{fin}(q,q_{k},h)} \Big(\max\limits_{i=0,\ldots ,n} f_{2}(\run(i),q_{0}\ldots q_{k},\rho(0)\ldots\rho(n))\Big),$\\ where
$f_{1,2}(\run(i),q_{0}\ldots q_{k},\rho(0)\ldots\rho(n)) = R(\run(i),q_{0}\ldots q_{k})-W(\run(0)\ldots \run(i))$ if this value $>0$,  $\run(i)\neq q_k$ and $\run(j)\neq \run(i)$ for all $j<i$, \\ and $f_1=15, f_2 =0$ otherwise. \\ \hline
$\textcolor{blue}{\pref_{1}}(q_{0}\ldots q_{k}) = $ 0 if  $W(q_{i_{\pi}}\ldots q_{k})\leq 50$, and $\maxpot(q_{0}\ldots q_{k},h) +1$ otherwise,\\
$\textcolor{blue}{\pref_{2}}(q_{0}\ldots q_{k}) =$ $\frac{1}{50^{3}}\cdot W(q_{i_{\pi}}\ldots q_{k})^{3}\cdot \maxpot (q_{0}\ldots q_{k},h)$, and
$\textcolor{blue}{\pref_{3}}(q_{0}\ldots q_{k}) = \frac{1}{\sqrt[3]{50}}\cdot \sqrt[3]{W(q_{i_{\pi}}\ldots q_{k})}\cdot  \maxpot (q_{0}\ldots q_{k},h)$, \\ where
$i_{\pi}$ is maximal $0\leq i\leq k$, such that $q_{i_{\pi}}\in \sur$ and $\maxpot(q_{0}\ldots q_{k},h)$ is the maximal $\pot(q,q_{0}\ldots q_{k},h)$ among all $q$, where $(q_{k},q)\in T$\\ \hline
\end{tabular}
\caption{Definitions of the state potential and the preference functions used in the case study.}
\label{tab:pot}
\end{table*}

\begin{proof}
\emph{(Sketch.)} Assume that Alg.~\ref{alg:solution} returns "Mission cannot be accomplished.". Then $\finf$ is empty and according to Lemma~\ref{th:inftysets}, $\phi$ cannot be satisfied in~$\T$.
Assume that Alg.~\ref{alg:solution} computes a strategy $C_\Prod$ for the product $\Prod$. We will show by contradiction that $C_\Prod$ generates a run $\runprod$ of $\Prod$ visiting $\finf$ infinitely many times. 
 Assume that there is a finite prefix $\runprodp = p_0\ldots p_k$ of $\runprod$, such that $p_n\not \in \finf$, for all $n\geq k$ and first, assume that the current subgoal is the surveillance one. Then, according to Assump.~\ref{assump} and the definition of the attraction function, the value of $\pref(\alpha(\runprodp')) > \pot_\Prod(p,\runprodp',h)$ for all prefixes $\runprodp' = p_0\ldots p_k \ldots p_l$ of the run $\runprodp$, such that $l \geq m$, for some $m\geq k$. This means that the "shortening" transitions will be preferred over the "non-shortening" ones since $t_m$ and thus, $p_j \in \sinf$ will be reached eventually. Second, assume that the mission subgoal is the current one. 
Then, according to Assump.~\ref{assump} and the definition of the attraction function, the value of $\pref(\bar\alpha(\runprodp')) > \pot_\Prod(p,\runprodp',h)$ for all prefixes $\runprodp' = p_0\ldots p_k \ldots p_l$ of the run $\runprodp$, such that $l \geq m$, for some $m\geq k$. Similarly as in the previous case, $p_j \in \finf$ will be reached eventually. Thus the proof is complete.
\end{proof}

In general, the satisfaction of condition (ii) of Problem~\ref{problem} cannot be guaranteed as repeated visits to the state maximizing Eq.~\ref{eq:opt} might prevent the mission to be satisfied. However, we reach some level of optimality as disscussed bellow.

In the attraction definition (Eq.~\ref{eq:attr}), the value of the state potential function $\pot_\Prod(p,p_0\ldots p_k,h)$ is computed in the product automaton instead of the transition system. As a result, it is computed assuming that only sequences of transitions that do not cause an immediate, unrepairable violation of the formula can be followed from $q$. 
If the current subgoal of the online planner is the surveillance subgoal, the following optimality statement can be made: A state of $\Prod$ maximizing the attraction (Eq.~\ref{eq:attr}) projects onto the state of $\T$ maximizing the cost function (Eq.~\ref{eq:opt}) taking into consideration only finite runs that do not cause an immediate violation of the formula. In contrast, if the current subgoal of the online planner is the mission one, we cannot claim the similar. First, the indicator function in the attraction (Eq.~\ref{eq:attr}) does not indicate whether a transition of the product automaton leads closer to $\sur$, it rather indicates whether it leads closer to both an accepting state and $\sur$. Second, the preference function in the attraction function (Eq.~\ref{eq:attr}) is computed for $\bar{\alpha}(p_0\ldots p_k)$ instead for $\alpha(p_0\ldots p_k)$. This is necessary for correctness of the algorithm, however, as a result, the value of $\pref(\bar\alpha(p_0\ldots p_k))$ in the attraction (Eq.~\ref{eq:attr}) might be different than the corresponding value of $\pref(q_0\ldots q_k)$ in the cost function (Eq.~\ref{eq:opt}).

In case $\finf = \{ q' \in S_\Prod \mid  q \in \sinf  \text{ and } (q,q') \in \delta_\Prod\}$, the mission subgoal is reached always exactly one planning step after the surveillance subgoal is reached. Therefore, we can reach the optimality that was stated in the previous paragraph for the surveillance subgoal also for the mission subgoal, since all the transitions from $\sinf$ are always "shortening" with respect to $\finf$. In particular, this is the case if a B\"uchi automaton with the property that all the transitions leading to an accepting states are labeled with a set containing $\sur$, is used in the product automaton construction. 
For instance, a surveillance fragment of LTL defined in~\cite{yushan-icra2012} guarantees existence of such a BA. The fragment includes LTL formulas that require to repeatedly visit a surveillance proposition $\sur$ (called an optimizing proposition in~\cite{yushan-icra2012}) and to visit a given set of regions in between any two successive visits to states satisfying $\sur$. In addition, ordering constraints, request-response properties, and safety properties are allowed.

\subsubsection*{Complexity}
The size of a BA for an LTL formula $\phi$ is $2^{\mathcal O(|\phi|)}$ in the worst case, where $|\phi|$ denotes the length of the formula $\phi$~\cite{FastLtlToBa}. However, note that the actual size of the BA is in practice often quite small. The size of the product automaton $\Prod$ is $\mathcal O (|Q|\cdot 2^{\mathcal O (|\phi|)})$. A simple modification of the Floyd-Warshall algorithm is employed to find the minimum weights between each pair of states 
in $\mathcal O (|\Prod|^3)$. The same complexity is reached for the computation of $\finf,\sinf$, $\distpi$ and $\distvarphi$. The shortening indicators $I_{\Prod \pi},I_{\Prod \phi}$ can be computed in linear time and space with respect to the size of $\Prod$. The overall complexity of Alg.~\ref{alg:value} is $\mathcal O \big((|Q|\cdot 2^{\mathcal O (|\phi|)})^3\big)$. The complexity of the online planning algorithm highly depends on the complexity of the state potential and the preference functions. The set $P_\mathrm{fin}(q,q_{k},h)$ can be computed in $\mathcal O(d^h)$, where $d$ denotes the maximal out-degree of states of $\Prod$. If $\pot$ and $\pref$ functions took constant time to compute, the online planning algorithm would be in $\mathcal{O} \big( d \cdot d^h\big)$ per iteration.


\section{Example}
\label{sec:case}

We implemented the framework with several concrete choices of the state potential and the preference function in a Java applet~\cite{tool}. In this section, we report on simulation results to illustrate employment of our approach.

We consider a data gathering robot in a grid-like partitioned environment modeled as a TS depicted in Fig.~\ref{fig:grid}. 
The robot collects data packages of various, changing sizes (rewards) in the visited regions. The following is known about the reward dynamics: A non-negative natural reward can appear in a state with the current reward equal to 0. The probability of the fresh reward being from $\{0,\ldots,15\}$ is 50\% as well as from $\{16,\ldots,60\}$ (\ie the smaller-sized data packages are more likely to occur). The reward drops by 1 every time unit as the data outdate. The visibility range $\vis$ is 6. For example, in Fig.~\ref{fig:grid} the visibility region $\Vis(q_0)$ for the current state $q_0$ is depicted as the blue-shaded area.

\begin{figure}[!t]
\begin{center}
\scalebox{0.23}{\input{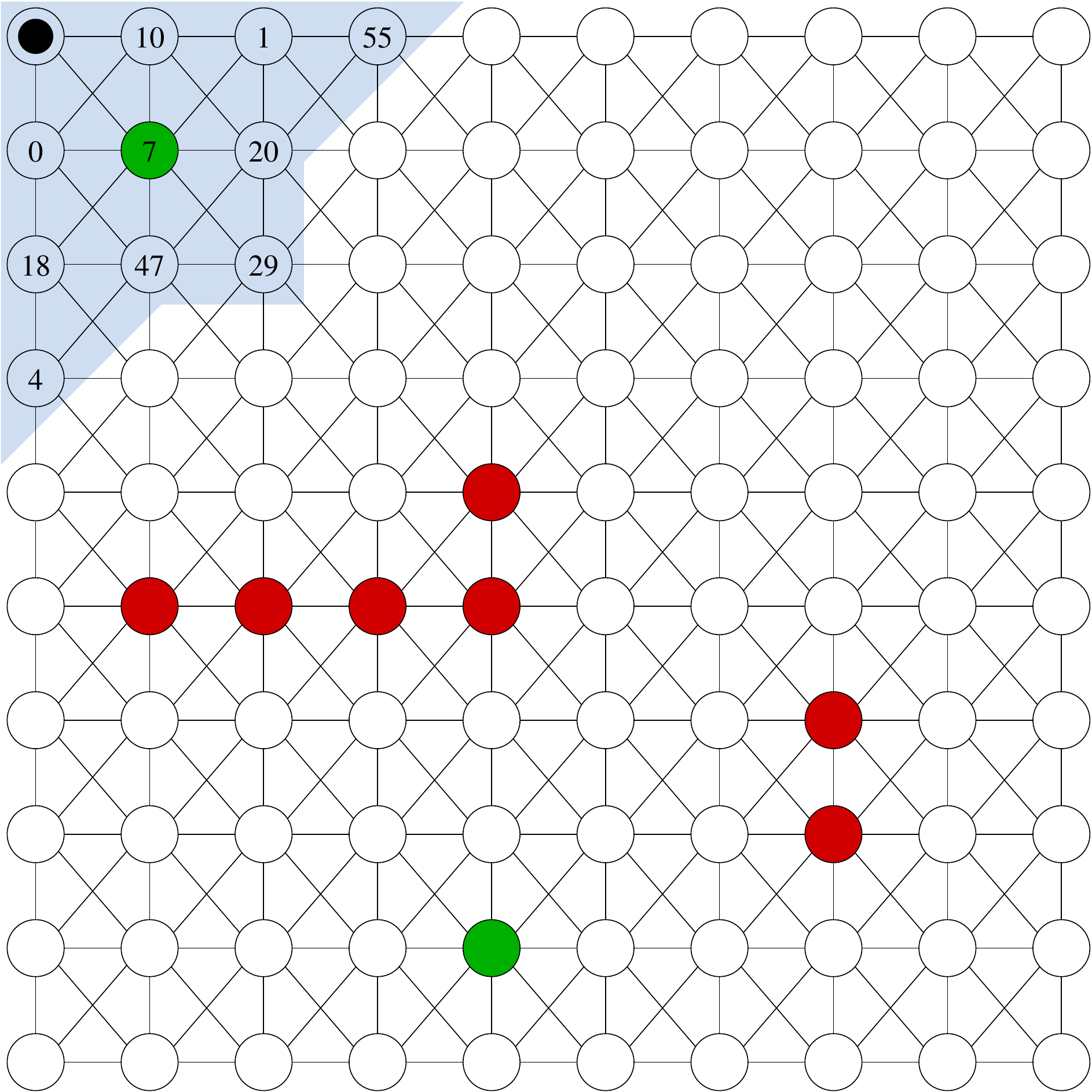_t}}
\caption{A transition system representing the robot (illustrated as the black dot) motion model in a partitioned environment. Individual regions are depicted as nodes (states). Transmitters are in green (labeled with propositions $a$ and $b$, respectively), unsafe locations (labeled with $u$) are in red. The set of transitions contains every pair $(q,q')$ of vertically, horizontally or diagonally neighboring states. 
Weights of a horizontal and a vertical transition are 2, weight of a diagonal transition is 3.
}
\label{fig:grid}
\end{center}
\end{figure}

The mission assigned to the robot is to alternately visit the two transmitters (in green, labeled with propositions $a$, and $b$, respectively), while avoiding unsafe locations (in red, labeled with $u$). The surveillance proposition $\sur$ is true in both transmitter regions.
The LTL formula for the mission is
\begin{align*}
\phi \:\equiv \: &  \G\,\big(a \Rightarrow\X\, (\neg a\,\U\,b)\big) \ \wedge \
\G \, \big(b\Rightarrow \X \, (\neg b\,\U\,a )\big)\ \wedge \\ & \G(\neg u) \ \wedge \ \G\,\F\,\sur.
\end{align*}

In our simulations, we consider the planning horizon $h=9$ and several variants of the state potential function and the preference function that are summarized in Table~\ref{tab:pot}. The first state potential function $\pot_1$ is the maximal sum of rewards that can be collected on a finite run while taking into account the reward behavior assumptions described above. If the run visits a state more than once or a reward of a state drops below 0, we assume the reward there is 15. The second state potential function $\pot_{2}$ is defined as the maximal size of a single data package that can be collected on a finite run.

The respective ratio of the value of $\pref$ and the maximum value of $\pot$ is always non-decreasing with the time elapsed since last transmission and the value of $\pref$ overgrows the maximum value of $\pot$ when the elapsed time is 50.
Intuitively, $\pref_1$ sets zero importance on going towards a transmitter if the last transmission occurred not more then 50 time units ago. On the other hand, $\pref_2$ rises quite slowly at the beginning and very quickly later. In contrast, the function $\pref_3$ grows very fast in the beginning and its growth slows down. 

For each of 6 instances we executed 5 runs of 100 iterations of the online planner. The sizes of the data collected in time are depicted in Fig.~\ref{fig:plots}. 
Table~\ref{tab:reward} shows the mean of average reward per transition and the time between consecutive surveys, respectively. 
As expected, the faster the preference function grows with time since the last survey, the smaller the reward per transition and the shorter the time between consecutive transmissions are. For $\pref_{1}$ and $\pref_{2}$, the difference in the reward per transition is not high, since in both cases the collection of rewards is preferred in the beginning, whereas $\pref_{3}$ is very steep and therefore drives the robot towards transmitter quickly. Function $\pot_{1}$ computing the maximal sum of rewards that can be collected gives, as expected, higher average and lower variance for both objectives comparing to $\pot_{2}$ that aims to collect big packages.

\begin{figure}[!t]
\centering
\subfloat[$\pot_{1}$ and $\pref_1$]{\label{fig:pot1pref1}\includegraphics[width=0.22\textwidth]{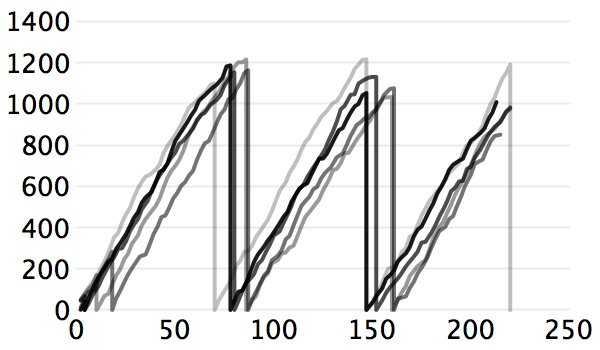}}~
\subfloat[$\pot_{2}$ and $\pref_1$]{\label{fig:pot2pref1}\includegraphics[width=0.22\textwidth]{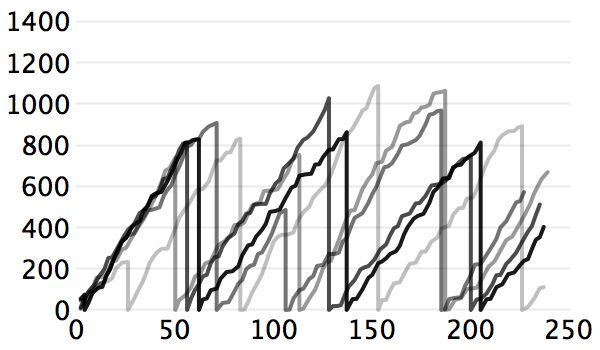}}\\
\subfloat[$\pot_{1}$ and $\pref_2$]{\label{fig:pot1pref2}\includegraphics[width=0.22\textwidth]{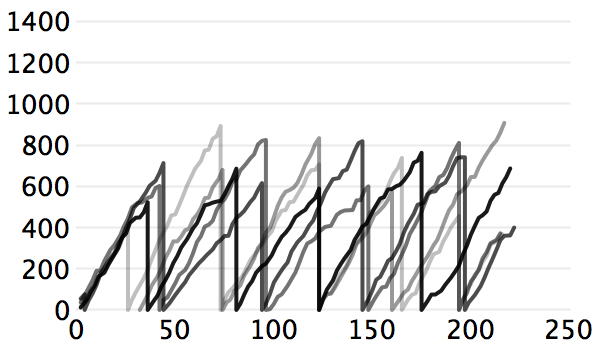}}~
\subfloat[$\pot_{2}$ and $\pref_2$]{\label{fig:pot2pref2}\includegraphics[width=0.22\textwidth]{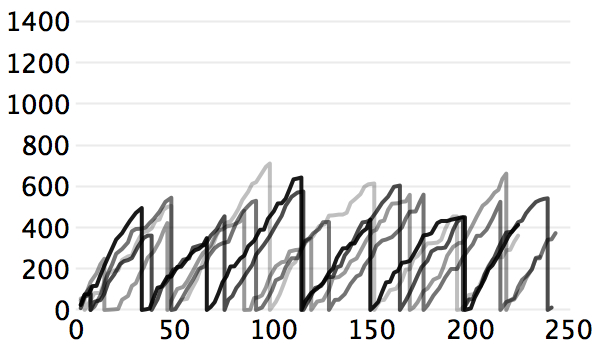}}\\
\subfloat[$\pot_{1}$ and $\pref_3$]{\label{fig:pot1pref3}\includegraphics[width=0.22\textwidth]{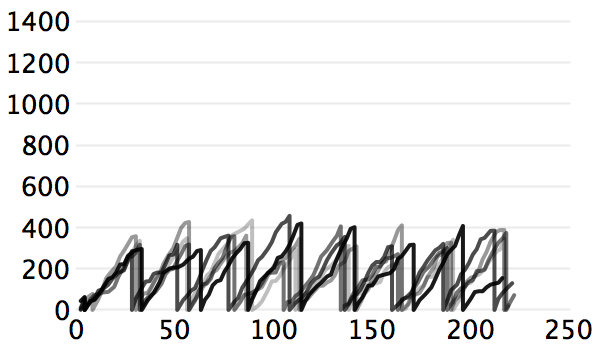}}~
\subfloat[$\pot_{2}$ and $\pref_3$]{\label{fig:pot2pref3}\includegraphics[width=0.22\textwidth]{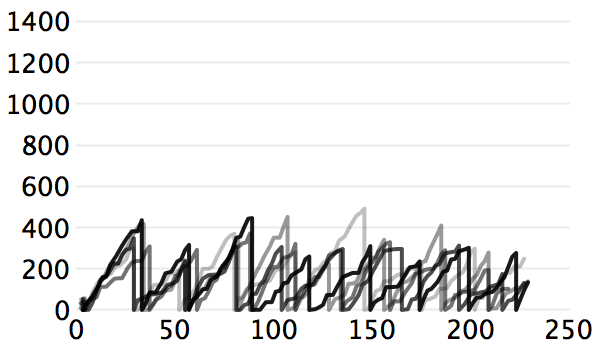}}
\caption{Total size of data collected since the last transmission with respect to time depicted for each executed run.}
\label{fig:plots}
\end{figure}

\renewcommand{\arraystretch}{1.3}
{
\begin{table}[!t]
\centering
\begin{tabular}{|c| c || c | c | c | c | c | c |}
\hline
& & 1/1 & 1/2 & 1/3 & 2/1 & 2/2 & 2/3\\
\hline
\hline
\multirow{3}{*}{r/T} & AVG & 33.8 & 33.7 & 28.2 & 30.8 & 29.2 & 25.9\\
& $\nu$ & 2.4\% & 4.4\% & 6.0\% & 3.9\% & 5.8\% & 8.3\%\\
\cline{2-8}
 & VAR & 13.8 & 14.6 & 15.9 & 19.2 & 18.3 & 18.7\\
\hline
\multirow{3}{*}{t} & AVG  & 73.4 & 46.0 & 26.7 & 66.2 & 41.9 & 26.4\\
& $\nu$ & 2.4\% & 8.0\% & 2.1\% & 8.4\% & 7.7\% & 3.0\%\\
\cline{2-8}
 & VAR  & 2.8 & 6.0 & 2.7 & 11.2 & 6.5 & 3.3\\
\hline
\end{tabular}
\caption{Statistical results for the reward per~transition (r/T) and the time between consecutive surveys (t) for different choices of $\pot/\pref$ functions (in the header). AVG is the mean of average computed on each run and VAR the mean of variance computed on each run. $\nu$ shows the percentage variance of the average among the runs.
}
\label{tab:reward}
\end{table}

The experiments were run on Mac OS X 10.7.3 with 2.7 GHz Intel Core i5 and 4 GB DDR3 memory. The BA had 8 states (3  accepting) and it satisfied the condition for optimality from Sec.~\ref{sec:solution_cor}. The product automaton had 800 states. The offline part of Alg.~\ref{alg:solution} took 6 seconds and one iteration of the online planning algorithm 1-2 milliseconds.


\section{Conclusions and Future Work}

We proposed a general framework for robot motion planning in environment with dynamically changing rewards. While a high-level surveillance mission is guaranteed to be accomplished, the user-defined priorities on trade-off between the surveillance frequency and the reward collection are taken into account. The motion of the robot is modeled as a weighted transition system. Although the weights are in this paper interpreted as time durations of the transitions, they can be, in general interpreted, as any quantitative aspect, such as length or cost. In future work, we would like to extend the suggested framework for systems that are modeled as Markov decision processes and to reaching solution optimality for special subclasses of the reward dynamics. Our plan is also to extend the implementation of the framework.

\label{sec:conclusion}


\section{Acknowledgement}
We thank Calin Belta and Dennis Ding from Boston University for many useful discussions.


\bibliographystyle{IEEEtran}
\bibliography{rh_attraction_CDC2012}

\end{document}